\newtheorem{prob}{Problem}[section]
\newtheorem{lemma}{Lemma}[section]
\newtheorem{proposition}{Proposition}[section]
\newtheorem{corollary}{Corollary}[section]
\newtheorem{theorem}{Theorem}[section]
\theoremstyle{definition}
\newtheorem{definition}{Definition}[section]
\newif\ifarxiv
\def\lsr{\texttt{LSR}\xspace}
\def\usr{\texttt{CSR}\xspace}
\def\mrmp{\texttt{MRMP}\xspace}
\def\sag{\textsc{SaG}\xspace}
\def\paf{\textsc{PaF}\xspace}
\begin{document}
\title{
Rubik Tables and Object Rearrangement
}
\author{
Mario Szegedy and Jingjin Yu
\thanks{
M. Szegedy and J. Yu are with the Department of Computer Science, 
Rutgers, the State University of New Jersey, Piscataway, NJ, 
USA. E-Mails: \{{\tt mario.szegedy,jingjin.yu}\}\hspace*{.25em}
\MVAt \hspace*{.25em}rutgers.edu. 
}%
}
\maketitle

\begin{abstract}
A great number of robotics applications demand the rearrangement of 
many mobile objects, e.g., organizing products on store shelves, shuffling 
containers at shipping ports, reconfiguring fleets of mobile robots, 
and so on. 
To boost the efficiency/throughput in systems designed for solving these 
rearrangement problems, it is essential to minimize the number of atomic 
operations that are involved, e.g., the pick-n-places of individual objects. 
However, this optimization task poses a rather difficult challenge due to 
the complex inter-dependency between the objects, especially when they are 
tightly packed together. \vspace{1mm}

In this work, in tackling the aforementioned challenges, we have developed 
a novel algorithmic tool, called Rubik Tables, that provides a clean 
abstraction of object rearrangement problems as the proxy problem of 
shuffling items stored in a table or lattice. 
In its basic form, a Rubik Table is an $n\times n$ table containing $n^2$ 
items. 
We show that the reconfiguration of items in such a Rubik Table can be 
achieved using at most $n$ column and $n$ row shuffles in the partially labeled 
setting, where each column (resp., row) shuffle may arbitrarily permute the 
items stored in a column (resp., row) of the table. 
When items are fully distinguishable, additional $n$ shuffles are needed. 
Rubik Tables allow many generalizations, e.g., adding an additional depth 
dimension or extending to higher dimensions.  
\vspace{1mm}

Using Rubik Table results, we have designed a first constant-factor optimal 
algorithm for stack rearrangement problems where items are stored in stacks, 
accessible only from the top. 
We show that, for $nd$ items stored in $n$ stacks of depth $d$ each, using one 
empty stack as the swap space, $O(nd)$ stack pop-push operations are sufficient for 
an arbitrary reconfiguration of the stacks where $d \le n^{\frac{m}{2}}$ for 
arbitrary fixed $m >0$. 
Rubik Table results also allow the development of constant-factor optimal 
solutions for solving multi-robot motion planning problems under extreme robot 
density. These algorithms based on Rubik Table results run in low-polynomial 
time.  


\end{abstract}

\section{Introduction}
In a broad range of real-world applications, items are arranged in 
\emph{stacks} to balance between efficient space usage and the ease of 
storage and retrieval (see Fig.~\ref{figure:stack-ex}). 
In a stack based storage solution, only the item on the top of an non-empty 
stack can be accessed instantaneously. If other stored items are to be 
retrieved, additional items must be moved beforehand. 
Such an approach, while preventing the direct random access of an arbitrary 
item, allows more economical utilization of the associated storage space, 
which is always limited. 
A prime example is the stacking of containers at shipping ports 
\cite{borgman2010online,dayama2014approaches}, where stacks of containers 
may need to be rearranged (shuffled) for retrieval in a specific order, 
which contains a stack rearrangement component. 
Similar scenarios also appear frequently elsewhere, e.g., parking yards 
during busy hours in New York City, the re-ordering of misplaced grocery 
items on supermarket shelves \cite{HanStiBekYu2017}, the rearrangement of 
goods in warehouses \cite{christofides1973rearrangement}, and so on. In 
all these application scenarios, the overall efficiency of the system 
critically depends on minimizing the number of item storage and retrieval 
operations. However, the tightly packed items, combined with the 
stack-based access, induce complex dependencies that make the 
rearrangement tasks a challenge to optimize. 

\begin{figure}[ht!]
	\begin{center}
  \begin{overpic}[width={\ifarxiv 0.8\columnwidth \else 0.95\columnwidth \fi}]{./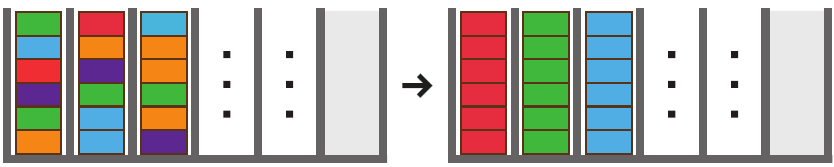}
	\put(3.5,-3){{\small $1$}}
	\put(11,-3){{\small $2$}}
	\put(18.4,-3){{\small $3$}}
	\put(25,-3){{\small $\ldots$}}
	\put(33.7,-3){{\small $n$}}
	\put(38,-3){{\small $n+1$}}
	\put(-3,15.6){{\small $1$}}
	\put(-2.5,8){\rotatebox{90}{{\small $\ldots$}}}
	\put(-3,1.8){{\small $d$}}
	\put(41.2,2.2){\rotatebox{90}{{\small buffer\,stack}}}
	\end{overpic}
	\end{center}
  \caption{\label{figure:stack-ex} An illustration of the colored stack 
	rearrangement problem with an initially empty buffer stack. From an initial 
	arrangement of the items shown on the left, we would like to sort the items 
	into the ordered target arrangement, by item color or type, on the right. 
	Items may assume unique identifiers in the labeled version of the problem.}
\end{figure}

Similarly, in many large-scale multi-robot applications, e.g., warehouse 
automation \cite{wurman2008coordinating,li2020lifelong}, it is desirable 
to operate as many robots as possible in a confined workspace: a small 
workspace requires less travel time for robots between tasks and more 
robots can execute more tasks simultaneously. At the same time, the 
efficient routing of a large number of robots in limited space again 
requires minimizing redundant movement of objects, in this case the 
robots. Such problems are well-known to be computationally intractable 
\cite{yu2013structure,solovey2016hardness} due to the curse of 
dimensionality. As such, whereas tractable solutions have been proposed 
at least a few decades ago \cite{kornhauser1984coordinating}, polynomial
time algorithms with good optimality guarantees have been illusive until 
recently, e.g., \cite{Yu18RSS}, due partly to this study.  

Motivated by the array of practical scenarios, we have performed a 
systematic study of multi-object rearrangement problems where the objects 
are either movable items or robots that could move on their own. As we 
pay close attention to the structure of these apparently diverse set of 
robotics problems (e.g., rearranging items in stacks and multi-robot motion 
planning), we observe that, as the density of the objects become sufficiently 
high, they can be effectively viewed as a problem of \emph{shuffling items 
stored in tables or lattices}. This leads us to the development of the 
\emph{Rubik Table} abstraction (see Fig.~\ref{figure:rubik-ex}). In a basic
two-dimensional setting, a Rubik Table is an $n\times n$ table storing 
$n^2$ items. These items may either have $n$ colors (or types) with each color 
containing $n$ items, or they may all have unique labels. The \emph{Rubik 
Table problem} asks for the sorting of the items by type or by label using 
the least number of column and row shuffles, where a column (resp., row) 
shuffle allows the arbitrary permutation of the items in a column (resp., 
row). 
Intuitively, solving the color-based Rubik Table sorting problem appears 
to require at least about $2n$ shuffles; it seems that each row and column 
should be worked on at least once. Surprisingly, the reconfiguration of the 
Rubik Table by color can be achieved using at most $2n$ column and row 
shuffles, through a careful application of a version of the Hall's marriage
theorem \cite{hall2009representatives}. As a consequence, sorting by label
is achievable using $3n$ shuffles.  

\begin{figure}[h]
	\begin{center}
  \begin{overpic}[width={\ifarxiv 0.7\columnwidth \else \columnwidth \fi}]{./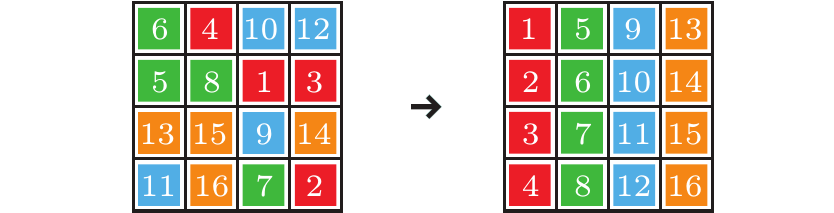}
	\put(27,-4){{\small (a)}}
	\put(71,-4){{\small (b)}}
	\end{overpic}
	\end{center}
  \caption{\label{figure:rubik-ex} An instance of a $4\times 4$ Rubik 
	Table problem where we want to rearrange the items to go from the left 
	configuration	to the right configuration, sorted by color or by label.}
\end{figure}

The Rubik Table results, with the extensions, turn out to be fairly powerful 
in helping boost the asymptotic efficiency of algorithms for solving object 
rearrangement tasks. In the stack rearrangement problem 
(Fig.~\ref{figure:stack-ex}), $nd$ items are stored in $n$ stacks of 
capacity $d$ each. With a buffer stack of depth $d$, we are to sort the 
items by type ($n$ of these, $d$ each) or by their unique labels using 
stack pop-push operations. We use \usr (resp., \lsr) as shorthand for the
colored (resp., labeled) stack rearrangement problem. 
For \usr and \lsr, through simulating column and row shuffles using a 
series of specially crafted stack pop-push operations, it is possible to 
apply a version of the Rubik Table results to derive an algorithm that 
requires only $O(nd)$ number of pop-pushes when $d \le n^{\frac{m}{2}}$ for 
an arbitrary
fixed $m > 0$. This suggests that the polynomial-time algorithm computes 
$O(1)$-optimal solutions for these settings. Previously, the best upper 
bound \cite{HanStiBekYu2017} incurs an additional logarithmic factor. 

In a similar vein, but to an apparently different problem, the Rubik Table 
theorems also apply to improving polynomial time multi-robot 
motion planning (\mrmp) algorithms. 
In the context of the current work, \emph{motion} refers to discrete time 
synchronous motion of many robots in a grid-like environment. 
To our knowledge, no polynomial time algorithms exist prior to 
\cite{Yu18RSS,demaine2019coordinated} that provide guaranteed solution makespan 
optimality for \mrmp. Through simulating column/row shuffles using more 
rudimentary multi-robot motion primitives, a version of the Rubik Table 
result gives rise to an expected $O(1)$-optimal solution for arbitrary 
fixed dimensions, computed in low polynomial time \cite{Yu18RSS}. Using 
this expected $O(1)$-optimal solution, an $O(1)$-optimal solution can be 
constructed as well. In this study, we highlight the application of Rubik
Table to \mrmp and describe a new and improved expected $O(1)$-optimal 
algorithm for \mrmp. 

In the robotics domain, our study relates to multi-object 
rearrangement tasks, which may be carried out using mobile robots 
\cite{ben1998practical,garrett2015ffrob,havur2014geometric} or fixed robot 
arms \cite{krontiris2015dealing,krontiris2016efficiently,
han2018complexity,huang2019large}. Clearly 
a challenging task and motion planning (TAMP) problem in the general setting 
\cite{huang2019large}, even the combinatorial aspect of object rearrangement 
is shown to be computationally hard in multiple problems in seemingly simple 
setups \cite{han2018complexity}. A multi-arm rearrangement problem is 
recently explored \cite{shome2018fast}. 
In a more abstract setting, multi-object rearrangement has also been 
studied under the PushPush line of problems \cite{demaine2001pushing,
demaine2000pushpush}.
More broadly, object rearrangement problems are connected to multi-robot 
motion planning problems \cite{erdmann1987multiple,kornhauser1984coordinating,
solovey2016hardness,Yu18RSS,demaine2019coordinated} and the problem of navigation among movable 
obstacles \cite{wilfong1991motion,stilman2008planning,van2009path}. 
Lastly, as a sorting problem, our study shares some similarities with 
sorting networks \cite{ajtai1983,west1993sorting}. 
Indeed, results on sorting networks can be applied to solve object 
rearrangement problems. 

The stack rearrangement problem, which we study to some depth in this work, 
was first formally studied in the stated form in \cite{HanStiBekYu2017}. 
They established an $O(nd(\log n + \log d))$ algorithmic upper 
bound. Heuristics-based search methods are also developed that can 
compute the optimal solution for stack rearrangement problems involving 
tens of items. 
A closely related problem is the Hanoi tower problem 
\cite{brousseau1980tower,szegedy1999many,grigorchuk2006asymptotic}, which 
has additional constraints limiting the relative order of items in a stack 
during the rearrangement process. 

The main algorithmic contributions of this work, beside building a novel 
structural connection between the abstract class of Rubik Table problems and 
challenging object rearrangement problems, are: 
\begin{itemize}
\item For an $n \times n$ Rubik Table, sorting by the $n$ types can be 
achieved using at most $2n$ column and/or row shuffles; if the items are all 
uniquely labeled, sorting can be done using at most $3n$ shuffles 
(Theorem~\ref{t:rubik-table}). The algorithm for computing the required 
shuffle operations takes $O(n^2\log n)$ expected time or $O(n^3)$ 
deterministic time. The Rubik Table results generalize to including a depth 
dimension (Theorem~\ref{t:fat-rt}) and to higher dimensions (Theorem~\ref{t:rrd}). 
\item For any fixed $m > 0$, \lsr (and therefore, \usr) with $d \le n^{\frac{m}{2}}$ 
can be solved using $O(nd)$ pop-push operations. If $m$ is an input parameter instead, 
\lsr with $d = n^{\frac{m}{2}}$ can be solved using $O(3^mnd)$ pop-pushes 
(Theorem~\ref{t:lsr-d-ge-n}). Therefore, for an arbitrary fixed real number 
$c$, \lsr may be solved using $O(nd)$ pop-pushes for $d \le \lceil cn \rceil$ 
(Theorem~\ref{t:lsr-d-cn}).
\item The \mrmp problem on an $m_1\times m_2$ grid can be solved using 
a makespane of $O(m_1 + m_2)$ through a single application of the Rubik 
Table shuffle algorithm. 
\end{itemize}

This paper builds on the conference publication \cite{SzeYu2020WAFR}. 
As we continue to develop the related research, this archive version adapts a 
new structure starting with the Rubik Table results and then develops their 
applications on stack rearrangement and multi-robot motion planning, which 
we believe is a more proper presentation of the research. In synchronization 
with the effort, we have developed several new results including a lower bound on 
the required number of shuffles needed for solving Rubik Table problems 
(Proposition~\ref{p:rubik-lb}), a new upper bound for the stack rearrangement 
problem with the constant in the big $O$ notation given explicitly 
(Proposition~\ref{p:7l} and Corollary~\ref{c:7l}), and a new expected 
$O(1)$ makespan-optimal algorithm for multi-robot motion planning
(Proposition~\ref{p:mrmpn}). We also expanded the discussion of open
problems, which provides several concrete future directions for 
readers to explore. 

The rest of the paper is organized as follows. In Sec.~\ref{sec:prelim}, 
Rubik Table problems, stack rearrangement problems, and a version of the 
multi-robot motion planning problem are formally defined; straightforward 
lower bounds for the Rubik Table problem and the stack rearrangement problem 
are also provided.
In Sec.~\ref{sec:rubik}, we present results on several Rubik Table
problems.
In Sec.~\ref{sec:stack}, refined upper bounds are established for \usr and 
\lsr, demonstrating the utility of the Rubik Table abstraction. 
We further expose the application of Rubik Table to multi-robot motion 
planning in Sec.~\ref{sec:mrmp}. 
We conclude in Sec.~\ref{sec:conclusion} with a brief discussion of many 
interesting open questions.

\section{Preliminaries}\label{sec:prelim}
\subsection{Rubik Table Problems}
The \emph{Rubik Table problem} (see, e.g., Fig.~\ref{figure:rubik-ex}) 
formalizes the task of carrying out globally 
coordinated token swapping operations in lattices. We associate it with the 
name \emph{Rubik} as it shares some similarity with the Rubik's Cube toy. 
The basic setting deals with a planar table. 

\begin{prob}[(Rubik Table problem)]\label{prob:rt}
Let $M$ be an $n \times n$ table containing $n^2$ items, one in each 
table cell. The $n^2$ items are of $n$ types with each type having a 
multiplicity of $n$. In a shuffle operation, the items in a single column 
or a single row of $M$ may be permuted in an arbitrary manner. Given an 
arbitrary configuration $X_I$ of the items, find a sequence of shuffles 
that take $M$ from $X_I$ to the configuration where column $i$, $1 \le i
\le n$, contains only items of type $i$. 
\end{prob}

Intuitively, $\Omega(n)$ shuffles are required for solving the Rubik 
Table problem, assuming that items are randomly distributed. 
We formally establish this lower bound here. 

\begin{proposition}[(Lower bound for the Rubik Table problem)]
\label{p:rubik-lb}
A random Rubik Table problem instance requires at least $n-1$ shuffles 
to solve, in expectation. 
\end{proposition}
\begin{proof} 
We may consider the probability that the content of cell $(i, i)$ must 
participate in a shuffle (of either column $i$ or row $i$). The 
probability of cell $(i, i)$ containing items with type $i$ is $\frac{1}{n}$, 
suggesting that the probability that the cell must be shuffled is 
$\frac{n-1}{n}$. The expected number of shuffles required to place an 
item of type $i$ at cell $(i, i)$ is then at least $\frac{n-1}{n}$. 
Summing over all $i$, $1 \le i \le n$, and by the linearity of expectation,
yields that at least $n-1$ shuffles are required to solve the Rubik 
Table problem, in expectation. 
\qed
\end{proof}

One may further require that the $n^2$ items be uniquely labeled $1, \ldots, n^2$, 
and ask for an arbitrary reconfiguration of the items. This is equivalent 
to sorting the items by label, i.e., going from Fig.~\ref{figure:rubik-ex}(a)
to Fig.~\ref{figure:rubik-ex}(b), ignoring the colors. We call this the 
\emph{labeled Rubik Table problem}. 

\begin{prob}[(Labeled Rubik Table problem)]\label{prob:rt}
Let the $n^2$ items in an $n\times n$ Rubik Table $M$ have unique labels 
from $1, \ldots, n^2$. Given an arbitrary configuration $X_I$ of the items, 
find a sequence of shuffles that takes $M$ from $X_I$ to the column-major 
sequential ordering of the $n^2$ items. 
\end{prob}

We may allow a Rubik Table $M$ to have a ``depth'' $K$. In each column 
(resp., row) shuffle of such a ``fat'' Rubik Table, we allow the 
arbitrary permutation of a ``fat'' column (resp., row) which contains $nK$ 
items. 

\begin{prob}[(Fat Rubik Table problem)]\label{prob:frt}
Let $M$ be an $n \times n \times K$ (row $\times$ column $\times$ depth) 
table containing $n^2K$ items, one in each cell of the table. These 
items are of $n$ types, with each type having a multiplicity of $nK$. 
In a shuffle operation, the items in a single fat column (i.e., items 
with indices in $\{1, \ldots, n\} \times \{i\} \times \{1, \ldots, 
K\}$ for $1 \le i \le n$) or a single fat row (i.e., items with 
indices in $\{i\} \times \{1, \ldots, n\} \times \{1, \ldots, K\}$ for 
$1 \le i \le n$) of $M$ may be permuted in an arbitrary manner. Given 
a configuration $X_I$ of the items, find a sequence of shuffles that 
takes the table $M$ from $X_I$ to the configuration where each fat 
column $i$, $1 \le i \le n$, only contains items of type $i$. 
\end{prob}

Similarly, a labeled version of the fat Rubik Table problem may be 
defined. We omit the straightforward definition. 

The Rubik Table problem may be further generalized to arbitrary 
dimensions. 

\begin{prob}[(Rubik $R$-D Table problem)]\label{prob:rrd}Let $M$ be an 
$\underbrace{n\times \ldots \times n}_R$ table, $R \ge 2$, filled with 
$n^R$ unique items. Assuming that any $(R-1)$-dimensional column can 
be arbitrarily shuffled, given two arbitrary configurations of the 
items, $X_I$ and $X_G$, find a sequence of shuffles that takes $M$ 
from $X_I$ to $X_G$. 
\end{prob}

\subsection{Stack Rearrangement}
In a \emph{stack rearrangement} problem (see, e.g., 
Fig.~\ref{figure:stack-ex}), there are $n$ stacks (i.e., LIFO queues), 
each filled to capacity with $d$ items. In the \emph{labeled} version, 
or \lsr (labeled stack rearrangement), the items in the stacks are 
uniquely labeled $1, \ldots, nd$. Given an arbitrary initial arrangement 
of the items, we would like to rearrange them to follow lexicographic 
order, in which the $k^{\rm th}$ stack, $1 \le k \le n$, contains items 
labeled $(k-1)d + 1$ to $kd$, with numbers decreasing (or increasing) 
monotonically from the top of the stack to the bottom of the stack. 
In a single \emph{pop-push} stack operation, an item can be popped 
off from any non-empty stack and immediately pushed onto a stack which is 
not filled to its capacity $d$. To allow the rearrangement of items, we 
assume that there is an empty buffer stack with capacity $d$. During the 
moves the buffer can hold items but it must be emptied by the end. We  
seek to \emph{minimize} the number of pop-pushes to take the stacks 
from an arbitrary initial arrangement to the specified target arrangement,
which is equivalent to having an arbitrary goal arrangement. 

In a \emph{colored} version, or \usr (colored stack rearrangement), we 
still require that items labeled $(k-1)d +1, \ldots, kd$ go into the 
$k^{th}$ stack but do not require these items take a specific order within 
the stack. This is equivalent to saying that we would like to sort $nd$ 
items with $n$ types of $d$ each so that the $k^{th}$ stack contains only 
items of type $k$. 

It takes at least $\Omega(nd)$ pop-pushes to solve the stack rearrangement 
problem for a typical input instance, because most items must move at 
least once to get into place. Here, we prove a stronger lower 
bound. We mention that similar bounds are described in 
\cite{HanStiBekYu2017}. We provide a more accurate bound for \lsr here 
with a proof counting the number of bits required to describe an 
algorithm. A bound for \usr is also included for completeness. 
 
\begin{lemma}[(Lower bound for \lsr)]\label{l:lsr-lb}
An average \lsr instance requires $\Omega(nd + nd{\frac{\log d}{\log 
n}} )$ pop-pushes to resolve.
\end{lemma}
\begin{proof}
The proof is by a counting argument. Any correct algorithm must follow 
different paths for all of the $(nd)!$ initial arrangements, since two 
different initial arrangements followed by identical moves would lead 
to different final arrangements. A step (corresponding to one pop-push
operation) of the algorithm can be 
described with $2 \lceil \log (n+1)\rceil$ bits: (from where, to where). 
Therefore, the two-based logarithm of the number of possible sequences 
of at most $t$ steps is upper bounded by $2t \lceil \log (n+1)\rceil$. 
So as long as it holds that
\[
2t \lceil \log (n+1)\rceil \le \log \left(0.01\cdot (nd)!\right)  =  \Omega(nd  \log nd),
\]
i.e. when $t = o(nd + nd\frac{\log d}{log n} )$, the initial arrangements 
that can be solved with $t$ steps constitute only a small minority of all 
arrangements. The counter-positive of this gives the lemma.
\end{proof}

\begin{lemma}[(Lower bound for \usr)]\label{l:usr-lb}
Any algorithm for \usr must take at least $\Omega(nd)$ pop-pushes for an average 
input.
\end{lemma}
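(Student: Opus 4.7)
The plan is to mirror the counting argument used for Lemma~\ref{l:lsr-lb}, adapting it to the unlabeled setting. The quantity that changes is the number of distinct inputs: since items of the same type are indistinguishable, the number of initial arrangements is the multinomial coefficient $\frac{(nd)!}{(d!)^n}$, rather than $(nd)!$.

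The first step is to argue that any correct algorithm must produce a distinct move sequence for each distinct initial arrangement. Each move is deterministic and invertible (given a move $(i,j)$ and the current state, one can recover the previous state), so the induced map from initial state to final state through a fixed move sequence $\sigma$ is a bijection. In \usr the target configuration is unique: every stack $k$ must contain exactly its $d$ indistinguishable type-$k$ items and the buffer must be empty, which leaves no freedom. Hence, if two distinct inputs $A_1 \neq A_2$ were solved by the same $\sigma$, both $\sigma(A_1)$ and $\sigma(A_2)$ would equal this unique target, contradicting invertibility of $\sigma$. Therefore the algorithm must use at least $\frac{(nd)!}{(d!)^n}$ distinct move sequences.

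Next, as in the \lsr proof, a sequence of $t$ moves is encodable with $2t\lceil \log(n+1) \rceil$ bits, so the number of distinct sequences of length at most $t$ is $2^{O(t \log n)}$. Using Stirling's approximation,
\[
\log \frac{(nd)!}{(d!)^n} = nd \log(nd) - n \cdot d \log d \pm O(nd) = nd \log n \pm O(nd) = \Theta(nd \log n).
\]
Restricting to all but a $0.01$ fraction of inputs, the same inequality forces
\[
2t\lceil \log(n+1)\rceil \;\ge\; \log\!\left(0.01\cdot \tfrac{(nd)!}{(d!)^n}\right) \;=\; \Omega(nd \log n),
\]
which yields $t = \Omega(nd)$, as desired.

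The only genuinely new ingredient compared to Lemma~\ref{l:lsr-lb} is the injectivity argument (the paragraph on distinct sequences), and the main potential pitfall is the temptation to simply claim that multiple valid outputs make injectivity fail; the reversibility of moves combined with the uniqueness of the \usr target configuration is what saves the argument. The Stirling estimate is routine and the rest is arithmetic.
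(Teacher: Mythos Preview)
Your proof is correct and follows essentially the same counting argument as the paper: the paper simply states the number of initial configurations as $(\Theta(n))^{\Theta(nd)}$ and invokes the argument of Lemma~\ref{l:lsr-lb}, whereas you compute the same quantity precisely via the multinomial coefficient and Stirling, and you spell out the injectivity/uniqueness step that the paper leaves implicit. The added care is welcome but does not change the route.
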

\begin{proof}
Me may view the generation of a random instance as selecting from $n$ 
types of items with replacement $d$ for up to $nd$ rounds. Therefore, 
there are $(\Theta(n))^{\Theta(nd)}$ initial configurations. Following 
the same argument from the proof of Lemma~\ref{l:lsr-lb}, $\Omega(nd)$ 
pop-pushes are necessary. 
\end{proof}

\subsection{Multi-Robot Motion Planning}\label{subsec:mrmp-d}
For multi-robot motion planning, we focus on a grid-based setting. Let 
$m_i$, $1 \le i \le R$ be the span of the $i^{th}$ dimension of an 
$R$-dimensional grid. That is, we work with an $m_1 \times \ldots 
\times m_R$ grid. There are $\prod_im_i$ robots labeled $1, \ldots, 
\prod_im_i$, each occupying a vertex of the grid. As to the motion 
primitive, in each time step, robots on any set of mutually disjoint, 
non-self-intersecting cycles of the grid may synchronously move across
one edge on these cycles in the same direction, i.e., robots may 
rotate along these cycles (see e.g., Fig.~\ref{figure:mrmp-move}). 

\begin{figure}[h]
	\begin{center}
  \begin{overpic}[width={\ifarxiv 0.7\columnwidth \else \columnwidth \fi}]{./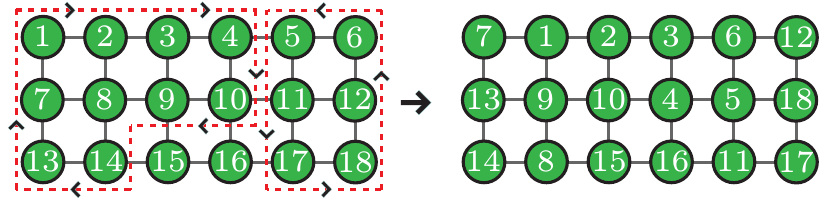}
	\end{overpic}
	\end{center}
  \caption{\label{figure:mrmp-move} Illustration of a single time step move
	of multiple robots along two disjoint cycles of a $3 \times 6$ grid.}
\end{figure}

The makespan-optimal multi-robot motion planning (\mrmp) problem seeks 
a routing plan that takes the robots between two arbitrary configurations
$X_I$ and $X_G$ with the minimum makespan. This is an NP-hard problem 
\cite{yu2015intractability,banfi2017intractability}. In this study, we 
will illustrate how Rubik Table facilitates solving \mrmp to $O(1)$ 
makespan-optimal.

\section{Algorithms for Rubik Table Problems}\label{sec:rubik}
Given that no less than $n-1$ shuffles are required for solving the Rubik 
Table problem in expectation, it is somewhat surprising that only $2n$ 
shuffles can get the job done. 

\begin{theorem}[(Linear shuffle algorithm for Rubik Table 
and labeled Rubik Table problems)]\label{t:rubik-table}
A Rubik Table problem is solvable using $n$ column shuffles followed 
by $n$ row shuffles. Additional $n$ column shuffles then solve the 
labeled Rubik Table problem. 
\end{theorem}

We call the result a linear shuffle algorithm because each item may be 
moved a (small) constant number of times, even though the number of 
column and row shuffles is the square root of the number of items which
is sub-linear.
Before presenting the proof of Theorem~\ref{t:rubik-table}, we introduce 
a K\H{o}nig-Hall type matching theorem \cite{hall2009representatives}
with parallel edges. 

\begin{lemma}[(Hall's Matching theorem with parallel edges)]\label{l:p-hall}
Let $B$ be a $d$-regular ($d > 0$) bipartite graph on $n + n$ nodes, possibly 
with parallel edges. Then $B$ has a perfect matching.
\end{lemma}
\begin{proof}
Let the vertex set of $B$ be $L \bigcupdot R$, where $L$ is the left partite 
set of $B$ and $R$ is the right partite set; the $\bigcupdot$ symbol denotes 
the disjoint union of two sets. Consider a maximal matching $M$ 
in $B$. We show that $M$ meets all of the vertices of $B$, so it is perfect. 
Assume that $M$ is not incident to some vertex $v \in L$. Consider all nodes of 
$B$ reachable by an alternating path from $v$, that is, a path that starts in $v$, 
goes to $R$ along some edge, then goes back to $L$ along an edge of $M$ (if such 
an edge exists), then goes along an arbitrary edge to $R$, an so on, always 
alternating between edges of $M$ and non-edges of $M$. We stop whenever we want.
If any such path $P$ ends up in a point $w$ of $R$ not matched in $M$, we could 
make $M$ bigger as follows: We discard from $M$ its intersection with $P$ and 
add $P \backslash M$ to it, creating 
$$M' = M\triangle P = (M \backslash (M \cap P)) \cup (P \backslash M).$$

It is easy to see that $M'$ is a matching and $|M'| = |M| + 1$, contradicting 
the maximality of $M$. Otherwise, let $L' \subseteq L$ be the subset of $L$ 
reachable via an alternating path from $v$ (which includes $v$ too), and let 
$R' \subseteq R$ be the set of nodes reachable with alternating path from $v$. 
Then $|L'| > |R'|$, since every vertex in $R'$ has a matching partner in $L'$ 
through $M$, and in addition $L'$ contains $v$, which is not a partner of any 
node in $R'$. Furthermore, all neighbors of the nodes in $L'$ must be in $R'$, 
otherwise we could find a neighbor $w$ of a node $t$ in $L'$, which is reachable
via an alternating path from $v$ (formed by adding edge $(t,w)$ to the 
alternating path from $v$ to $t$), but unmatched in $M$. Since the nodes in $L'$
have a total of $d|L'|$ edges incident to them (counted with multiplicities), 
which is more than the number $d|R'|$ of edges incident to $R'$ (counted
with multiplicities), we again have a contradiction.
\end{proof}

\begin{proof}[Proof of Theorem~\ref{t:rubik-table}]
The $n + n + n$ shuffles to construct an arbitrary permutation $\pi$ for solving 
the labeled Rubik Table problem are outlined in Table~\ref{t:shffle}. 
\begin{table}[ht]
\begin{tabularx}{\columnwidth}{l l X}
\hline
1. & Preparation: & By appropriately permuting the items within 
each column we reach the situation where the $n$ items destined to go to 
any fixed column will end up in $n$ different rows. \\ \hline
2. & Column fitting: & By appropriately permuting the items within 
each row we reach the situation where the $n$ items destined to go to 
any fixed column goes to that column.\\ \hline
3. & Row fitting: & By appropriately permuting the items within each 
column we move each item into its final destination. \\ \hline
\end{tabularx}
\vspace*{2mm}
\caption{\label{t:shffle}A three-phase shuffle plan for rearranging labeled 
items in an $n \times n$ Rubik Table. The first $n + n$ shuffles solve the
type/color based Rubik Table problem.}
\end{table}

The preparation phase is necessary for the column fitting phase. We need 
to prove that we can permute the items only within every column (i.e. 
such that no item changes its column coordinate) with the effect that the $n$ 
items destined to go to any fixed column end up in $n$ different rows. 
This comes from Lemma~\ref{l:column-permute}, which shows the feasibility 
of the preparation phase and therefore, the entire algorithm. 
\end{proof}

\begin{lemma}\label{l:column-permute}
Let $M$ be an $n \times n$ table filled with items of $n$ different 
types. The number of items of type $i$ is exactly $n$ for $1 \le i \le n$. 
Then we can permute the items within each column of $M$ separately such that 
in the resulting new arrangement all of the $n$ items of any fixed type
$i$ (for $1 \le i \le n$) go into separate rows. 
\end{lemma}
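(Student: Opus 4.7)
The plan is to reduce the lemma to the classical fact that every regular bipartite multigraph decomposes into perfect matchings, which in turn follows from Hall's marriage theorem.

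First I would encode the instance as a bipartite multigraph $G = (U \cup V, E)$. Take $U = \{c_1, \ldots, c_n\}$ indexed by the columns of $M$, and $V = \{t_1, \ldots, t_n\}$ indexed by the types. For each cell of $M$ lying in column $j$ and carrying a type-$i$ item, add one edge between $c_j$ and $t_i$; parallel edges are allowed, and in fact necessary, because a single column may contain several copies of the same type. The hypotheses of the lemma — every column holds $n$ cells and every type occurs exactly $n$ times in total — translate into $\deg(c_j) = n$ and $\deg(t_i) = n$, so $G$ is $n$-regular.

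Next, I would appeal to the standard result that every $k$-regular bipartite multigraph edge-partitions into $k$ perfect matchings. Hall's condition is immediate: for any $S \subseteq U$ the $k|S|$ edges leaving $S$ must land in $N(S)$, forcing $|N(S)| \ge |S|$; extract one perfect matching and recurse on the $(k-1)$-regular leftover graph. Applying this to $G$ yields a decomposition $E(G) = M_1 \sqcup \cdots \sqcup M_n$.

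Finally, I would use the decomposition to prescribe the column shuffles. For each row index $r$ and column index $j$, the matching $M_r$ contains a unique edge incident to $c_j$; that edge singles out one specific cell currently in column $j$, which I move to row $r$. Because each cell of column $j$ contributes exactly one edge of $G$ and that edge belongs to exactly one $M_r$, the assignment ``cell of column $j \mapsto$ target row'' is a bijection, i.e.\ a legitimate within-column permutation. Because $M_r$ is a perfect matching on the $V$-side, the $n$ cells routed to row $r$ realize each of the $n$ types exactly once, which is the claimed Latin-square condition. The only genuine subtlety is the modeling step: it is essential that $G$ be a \emph{multi}graph so that repeated copies of a type within a single column are tracked individually; once that is in place, the rest is a direct, textbook application of Hall's theorem.
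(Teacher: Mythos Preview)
Your proposal is correct and follows essentially the same argument as the paper: build the bipartite multigraph between columns and types, observe it is $n$-regular, peel off perfect matchings one at a time via Hall's theorem, and let the $r^{\rm th}$ matching dictate which cell of each column is sent to row $r$. Your write-up is in fact slightly more careful than the paper's in spelling out why the multigraph structure matters and why the resulting map on each column is a bijection.
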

\begin{proof}
We begin by creating a bipartite graph $B(T,C)$ on $n + n$ nodes such that 
the left partite set, $T$, stands for all the types $\{1, \ldots n\}$, and 
the right partite set, $C$, stands for all the columns of $M$ (also, see 
Fig.~\ref{figure:rubik}). We draw $k$ edges between type $j$ and column $i$, 
if column $i$ contains $k$ items of type $j$. Notice that $B$ is $n$-regular 
from both sides with parallel edges. Lemma~\ref{l:p-hall} implies that graph 
$B$ contains a perfect matching $M_1$. Label the edges of this matching with 
the number $1$, and take it out of $B$. We obtain an $(n-1)$-regular 
bipartite graph on which Lemma~\ref{l:p-hall} may be applied again. We keep 
creating matchings $M_2$, $M_3$, $\ldots$, in this fashion and label their 
edges with $2$, $3$, $\ldots$, until we arrive at $M_n$, when we stop. 
Notice that now each type $j \in T$ is connected to edges labeled with $1$ 
through $n$, and that each column $C_i$ is connected to all $n$ types of 
edges as well (in both cases exactly one from each type). For every $1 \le 
i \le n$ we rearrange the items in column $C_i$ such that the item 
corresponding to an edge labeled with $i$ goes into the $i^{\rm th}$ row. 
There will be no collisions by construction and we have arrived at the 
desired arrangement. 
\end{proof}

From the algorithmic perspective, each matching step in 
Lemma~\ref{l:column-permute} can be computed in expected $n\log n$ time 
\cite{goel2013perfect}. Alternatively, if a deterministic algorithm is 
desirable, a matching can be computed in $O(n^2)$ time \cite{cole2001edge}. 
The $n$ matchings can then be completed in $O(n^2\log n)$ expected time 
or $O(n^3)$ deterministic time. This is the dominant part of the 
algorithm's time complexity. 

\begin{figure}[h!]
	\begin{center}
  \begin{overpic}[width={\ifarxiv 0.7\columnwidth \else \columnwidth \fi}]{./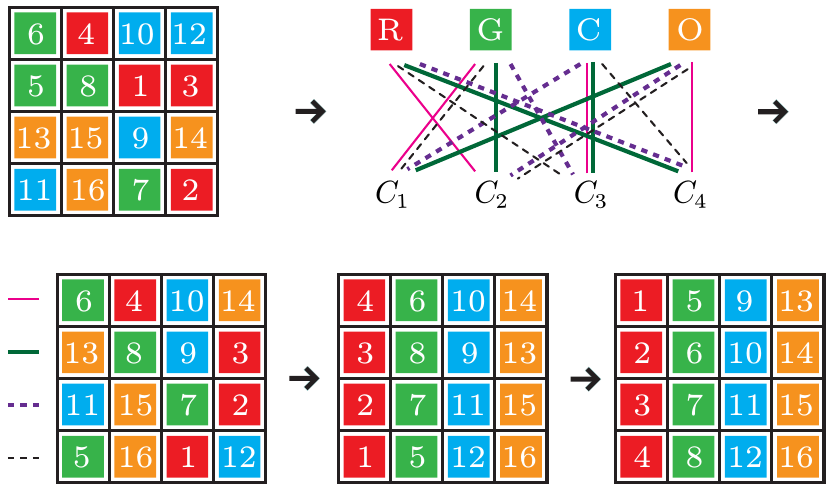}
	\put(11.5,28.5){{\small (a)}}
	\put(63.5,28.5){{\small (b)}}
	\put(17.5,-3.5){{\small (c)}}
	\put(50.8,-3.5){{\small (d)}}
	\put(84.5,-3.5){{\small (e)}}
	\end{overpic}
	\end{center}
  \caption{\label{figure:rubik} Illustration of applying the $n + n + n$ 
	shuffles as described in the proof of Theorem~\ref{t:rubik-table}. 
	(a) The initial $4\times 4$ table with a random arrangement of 16 items
	that are also colored (red, green, cyan, orange). (b) The bipartite graph 
	constructed from the table and a possible set of $4$ perfect 	matchings, 
	where $C_i, 1 \le i \le 4$ are the columns. As an example, green appears 
	twice in the first column of the table in (a) so there are two edges 
	between green and $C_1$. Each matching is marked with a unique line type
	(thin, thick, thick dashed, thin dashed). (c) Permuting each column according 
	to the matching results in each row containing each item type exactly once. 
	(d) Permuting each row of (c) then sorts all columns. (e) A final set of 
	$n$ column shuffles fully sorts the items. }
\end{figure}

\begin{figure*}[ht!]
	\begin{center}
  \begin{overpic}[width={\ifarxiv \columnwidth \else 2\columnwidth 
	\fi}]{./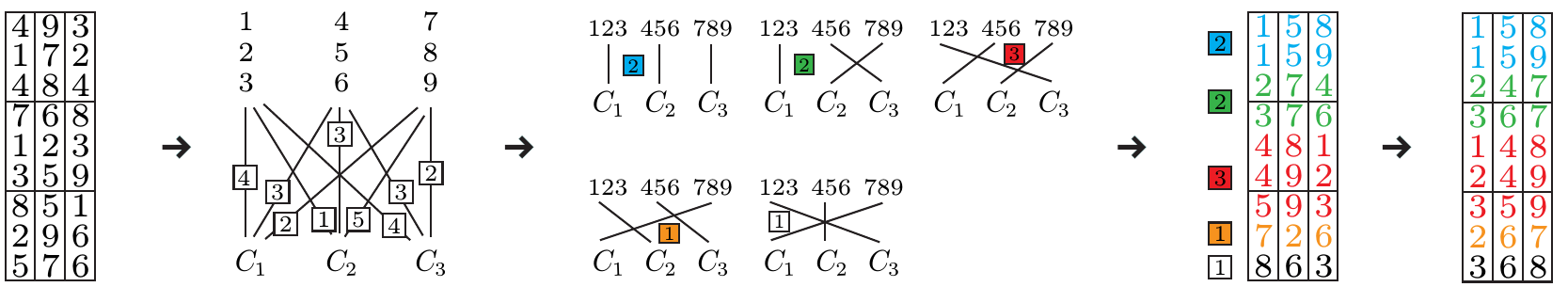}
	\put(2.2,-1.8){{\small (a)}}
	\put(20.8,-1.8){{\small (b)}}
	\put(51.8,-1.8){{\small (c)}}
	\put(81.4,-1.8){{\small (d)}}
	\put(95,-1.8){{\small (e)}}
	\end{overpic}
	\end{center}
  \caption{\label{figure:fat-rubik} Illustration of applying the first two 
	shuffle phases from the proof for Theorem~\ref{t:fat-rt}. (a) An initial 
	$3\times 3 \times 3$ fat Rubik Table with a random arrangement of $9$ types 
	of items. (b) The (weighted) regular 	bipartite graph from the setup in 
	(a). The numbers on edges denote the 	weight/multiplicity of the edges. 
	(c) The $5$ sets of (weighted) perfect 	matching extracted from (b). (d) 
	The fat-column permutations based on the matchings.	Notice that the numbers 
	in the columns remain the same between (a) and (d).	(e) The following 
	fat-row permutations which correctly sort the columns. With 	one more 
	round of fat-column permutations, we can sort the table so that each 
	cell contains a single item type. Additional distinguishability within 
	a cell is 	allowed as well.}
\end{figure*}

To provide some intuition of the shuffle algorithm, Fig.~\ref{figure:rubik} 
illustrates an application of the procedure used in proving 
Theorem~\ref{t:rubik-table} on a $4 \times 4$ Rubik Table containing $16$ 
items that also fall into four colors (types). 
Fig.~\ref{figure:rubik}(b) shows the constructed $4$-regular bipartite graph 
based on the configuration from Fig.~\ref{figure:rubik}(a). Four matchings 
are shown in different line patterns (thin solid, thick solid, thick dashed,
and thin dashed) and colors (pink, green, purple, and black). 
Based on the matchings, an intermediate Rubik Table is constructed as given 
in Fig.~\ref{figure:rubik}(c), where each column is shuffled as compared with 
Fig.~\ref{figure:rubik}(a). 
Note that each row now contains each type exactly once. 
After $n$ row shuffles, sorting by type is achieved as shown in 
Fig.~\ref{figure:rubik}(d). 
With one more round of $n$ column shuffles, Fig.~\ref{figure:rubik}(d) can be 
sorted to reach the fully sorted column-major configuration in 
Fig.~\ref{figure:rubik}(e).

It can be readily verified that Theorem~\ref{t:rubik-table} can be 
generalized to tables that are not squares. 

\begin{corollary}[(Linear shuffle algorithm for the Rubik Rectangle problem)]
Let $M$ be an $n \times m$ table filled with $nm$ unique items, $n \le m$. 
In $n$ row shuffles and $2m$ column shuffles, items in $M$ can be sorted 
arbitrarily. 
\end{corollary}

The matching-based shuffle routine is quite flexible, allowing many 
extensions to the Rubik Table problem. A particular useful extension is one 
for the fat Rubik Table problem as stated in Problem~\ref{prob:frt} and 
its labeled version, which adds a \emph{depth} to each table cell. With 
some relatively minor modifications, $2n$ or $3n$ shuffles are again sufficient.

\begin{theorem}[(Linear shuffle algorithm for fat Rubik Table problems)]
\label{t:fat-rt}
The fat Rubik Table problem and the labeled fat Rubik Table problem may be 
solved using $2n$ shuffles and $3n$ shuffles, respectively. 
\end{theorem}
\begin{proof}
The proof of Theorem~\ref{t:rubik-table} can be adapted with relatively 
minor changes. A similar three-phase procedure will be followed; again, the 
crucial part is the proof of the preparation phase, in which we show that 
we can permute the items within each fat column to reach the situation where 
the $nK$ items destined to go to any fixed fat column will end up in $nK$ 
positions, that are different when we project them to the first and third 
coordinates. The needed procedure for doing this is provided in 
Lemma~\ref{l:fat-column-permute}. 
\end{proof}

\begin{lemma}\label{l:fat-column-permute}
Let M be an $n\times n \times K$ table (row $\times$ column $\times$ depth) 
filled with items of $n$ different types. The number of items of type $j$ 
is exactly $nK$ for $1 \le j \le n$. Then we can permute the items within 
each fat column ($*, i, *$) of $M$ ($1 \le i \le n$) such that for any 
fixed type $j$ ($1 \le j \le n$), if we look at the $nK$ items of type 
$j$, they occupy distinct (row, depth) values when we project the triplet 
representing their new positions to the pair of row and depth coordinates.
\end{lemma}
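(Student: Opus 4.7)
The strategy will mirror the proof of Lemma~\ref{l:column-permute}, lifted to the three-dimensional setting. The key reformulation is that the desired property --- every type occupying all $nK$ distinct (row, depth) projections --- is equivalent to the following slicewise statement: for every fixed pair $(r,k)$, the $n$ items sitting at positions $(1,r,k),(2,r,k),\ldots,(n,r,k)$ form a permutation of the $n$ types. Since items are permitted to move only within fat columns (the first coordinate is preserved), the problem reduces to assigning each item currently in fat column $i$ a destination $(r,k)$ within that column, in such a way that across all fat columns each type is distributed to every $(r,k)$ slot exactly once.

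To accomplish this, I would build a bipartite multigraph $B(T,C)$ in direct analogy to the proof of Lemma~\ref{l:column-permute}, where $T=\{1,\ldots,n\}$ indexes types, $C=\{1,\ldots,n\}$ indexes fat columns, and $m_{ij}$ edges are drawn between type $j$ and fat column $i$, with $m_{ij}$ denoting the number of items of type $j$ currently residing in fat column $i$. Because every fat column holds $nK$ items and every type has $nK$ copies, both sides of $B$ are $nK$-regular. Applying Hall's theorem iteratively, exactly as in the $K=1$ case but now for $nK$ rounds instead of $n$, yields $nK$ edge-disjoint perfect matchings $M_1,\ldots,M_{nK}$ whose union recovers $B$.

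Next I would identify these $nK$ matchings with the $nK$ slots $(r,k)\in\{1,\ldots,n\}\times\{1,\ldots,K\}$ through an arbitrary bijection, writing $M_{(r,k)}$ for the matching assigned to slot $(r,k)$. For each fat column $i$ and each $(r,k)$, the matching $M_{(r,k)}$ selects a unique type $j=j(i,r,k)$; by the matching decomposition this type is selected exactly $m_{ij}$ times as $(r,k)$ ranges over all slots. Consequently, inside each fat column $i$ one can bijectively assign the $m_{ij}$ physical items of type $j$ to the $m_{ij}$ destination slots that requested type $j$, producing a well-defined permutation on the $nK$ positions of fat column $i$. After applying these within-column permutations, every horizontal slice $\{1,\ldots,n\}\times\{r\}\times\{k\}$ contains each type exactly once, which is precisely the Latin-cube-like property required.

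The hard part is not the existence of the matchings --- Hall's theorem applies verbatim to a regular bipartite multigraph --- but rather the bookkeeping that converts a decomposition of $B$ into an honest within-fat-column permutation. The point to verify carefully is that fat column $i$ is asked for type $j$ exactly $m_{ij}$ times across the $nK$ matchings, matching the supply of type-$j$ items available there; this compatibility, which is immediate from the decomposition but deserves explicit mention, is what makes the translation from abstract matching to concrete rearrangement go through.
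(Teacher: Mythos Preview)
Your proposal is correct and follows essentially the same approach as the paper: build an $nK$-regular bipartite multigraph on types versus fat columns, decompose it into $nK$ perfect matchings via iterated application of Hall's theorem, and use each matching to dictate which type occupies a given (row, depth) slot within every fat column. Your write-up is in fact more explicit than the paper's about the bookkeeping step (that fat column $i$ is requested for type $j$ exactly $m_{ij}$ times, matching supply), which the paper leaves implicit.
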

\begin{proof}
The proof of the lemma is again based on applying Lemma~\ref{l:p-hall} on an 
$n + n$ bipartite graph. The nodes on the left are $n$ different types 
and the nodes on the right represent the fat columns. The edges correspond 
to the items, and we have $K$ parallel edges between right node $i$ and left
node $j$ as long as $K$ items need to go from fat column $i$ to fat column 
$j$. The only difference is that now the graph is $nK$-regular rather than 
$n$-regular. Again, we can iteratively decompose the edge-set of this 
bipartite graph into $nK$ perfect matchings, which gives the solution 
we are looking for.
\end{proof}

In terms of running time, the fat Rubik Table problems introduces an 
additional multiplicative factor $K$, yielding $O(n^2K\log n)$ expected or 
$O(n^3K)$ deterministic time. 

Again, to provide some intuition of the somewhat non-intuitive procedure, 
Fig.~\ref{figure:fat-rubik} illustrates an application of 
Theorem~\ref{t:fat-rt} to derive the first two sets of permutations for 
restoring order to a $3\times 3\times 3$ fat Rubik Table. In applying 
Lemma~\ref{l:fat-column-permute}, type $j$ corresponds to items numbered 
$(j-1)*3 + 1$ to $j*3$. For example, all items numbered $1-3$ are treated 
as type $1$. 

As for the Rubik Table problem, we mention that non-square fat Rubik Tables 
can be supported and leave it to the readers to fill in the details. 

Lastly, we examine a high-dimensional version of the Rubik Table problem. 
A fat version adding additional depth is again possible, which we do not 
further detail in this exposition. 

\begin{theorem}[(Rubik $R$-D Table)]\label{t:rrd}Let $M$ be an 
$\underbrace{n\times \ldots \times n}_R$ table, $R \ge 2$, filled with 
$n^R$ unique items. Assuming that any $(R-1)$-dimensional column can be 
arbitrarily shuffled, then $M$ can be arbitrarily sorted in $(2^R-1)n^{R-1}$ 
shuffles. 
\end{theorem}
\begin{proof}
Let $F(n,R)$ be the number of shuffles for given $n$ and $R$. We prove 
the claimed bound on $F(n,R)$ by induction on $R$. We can do the $2$ 
dimensional case in $3n$ shuffles by Theorem~\ref{t:rubik-table}. For $R 
> 2$, select out the first two dimensions and treat the remaining $R-2$ 
dimensions as the depth of a fat Rubik Table. By the induction hypothesis 
we can permute any fat column of $M$ any way we want in $F(n,R-1)$ 
shuffles (by the induction hypothesis, $F(n,R-1)=(2^{R-1}-1)n^{R-2}$). 
In the preparation phase we must do $n$ of these. Then we do $n^{R-1}$ 
row operations and finally we do again permutations on the fat columns, 
which cost $nF(n,R - 1)$. Altogether, we have 
\begin{align*}
F(n, R) &= 2nF(n,R - 1) + n^{R-1} \\
&= 2n(2^{R-1} - 1)n^{R-2} + n^{R-1} \\ 
&= (2^R-1)n^{R-1}.
\end{align*}
\end{proof}

\section{Application to Stack Rearrangement}\label{sec:stack}
In \cite{HanStiBekYu2017}, an $O(nd(\log n + \log d))$ upper bound 
is established for \lsr. Their method is sequential: it first sorts items
into the correct stacks and then sort each stack individually. 
In Sec.~\ref{subsec:logd}, we start with describing a more efficient 
approach that improves the bound for $n \ge d$ to $O(nd \log d)$ with the 
constant in the big $O(\cdot)$ notation given explicitly. This is achieved interleaving 
inter- and intra-stack sorting. This result also uses ideas that enable 
the Rubik Table results but in a more direct manner. As such, the proof 
techniques may be of independent interest. 
Then, in Sec.~\ref{subsec:nd1} and Sec.~\ref{subsec:nd2}, we show how the 
Rubik Table results allow us to further lower the upper bound to $O(nd)$ 
for $d \le n^{\frac{m}{2}}$ with $m > 0$ being an arbitrary constant. 
We briefly discuss the cases of having constant $n$ or $d$ in 
Sec.~\ref{subsec:cnd}.

\subsection{A $7nd(1 + \log d)$ Upper Bound for $n \ge d$}\label{subsec:logd}
As stated, this subsection seeks to establish the following.
\begin{proposition}[(A $7nd(1 + \log d)$ upper bound for \usr with $n \ge d$)]
\label{p:7l}
For arbitrary $n \ge d$ where $d = 2^{\lceil \log d \rceil}$, a \usr instance 
can be solved using $7nd(1 + \log d)$ pop-pushes. 
\end{proposition}

For simplicity, assume that $d$ is a two-power, i.e., $d = 2^{\lceil \log
d \rceil}$. If $d$ is not a two-power, we lose at most a factor of $2$ in 
terms of efficiency. The algorithm is recursive, and we recurse on $\log 
d$. When $d = 1$, there is a simple solution that uses less than 
$\frac{3}{2}n$ pop-pushes. Let $F(n, d)$ be the number of pop-push 
operations used by our algorithm, we will prove the following recurrence:
$$
F(n,d) = 2F(n, \frac{d}{2}) + 7nd. 
$$
This will clearly give the claimed efficiency because $7nd(1 + \log d)
= 7nd + 2(7n\frac{d}{2})(1 + \log \frac{d}{2})$.
We denote the content of a stack $S_i$ with $S_i[1], \ldots, S_i[d]$, 
where $S_i[d]$ refers to the top of stack $S_i$. The buffer stack is 
denoted as $S_{\emptyset}$.
The recurrence will be proved using two intermediate lemmas. We will 
need the notion of a balanced arrangement. 

\begin{definition}[(Balanced arrangement)] An arrangement, where each stack 
$S_i, 1 \le i \le n$, holds d items, is \emph{balanced} if for all types 
$j \in \{1,\ldots,n\}$, the number of items of type $j$ elements held in 
$S_1[1..\frac{d}{2}] \cup \ldots \cup S_n[1..\frac{d}{2}]$ is exactly 
$\frac{d}{2}$.
\end{definition}

\begin{lemma}
Assume we have an arbitrary arrangement of the items in the stacks $S_1,
\ldots, S_n$, where each $S_i$ holds exactly $d$ items. Then, we can permute 
the contents of each stack individually such that the arrangement becomes 
balanced.
\end{lemma}
\begin{proof}
Create a bipartite graph $B(T,S)$ on $n + n$ nodes such that the left side,
$T$, stands for all the types $\{1,\ldots,n\}$, and the right side, $S$, 
stands for all the stacks $S_1, \ldots, S_n$. We draw $k$ edges between 
type $j$ and stack $i$ if stack $i$ contains $k$ elements of type $j$. 
Notice that $B$ is $d$-regular from both sides. Hall’s theorem implies that 
graph $B$ contains a perfect matchings $M_1$. We remove $M_1$ and repeat 
the process to get $\frac{d}{2}$ matchings $M_1, \ldots, M_{\frac{d}{2}}$. 
Color these matchings blue. 
Notice that now in $B$, each type $T_j$ is connected to exactly 
$\frac{d}{2}$ blue edges, and that each stack $S_i$ is connected to 
$\frac{d}{2}$ blue edges as well. For every $1 \le i \le n$ rearrange the 
items in stack $S_i$ such that the items corresponding to the blue edges 
occupy $S_i [1..\frac{d}{2}]$. We have arrived at a balanced arrangement.
\end{proof}

Next, we show how a balanced arrangement can be realized efficiently.

\begin{lemma}\label{l:swap}
For $n \ge d$, starting from an arbitrary arrangement, we can achieve 
a balanced arrangement using at most $4nd$ pop-pushes.
\end{lemma}
\begin{proof}
It is sufficient to show that any permutation of stack $i$ can be implemented 
in $4d$ pop-pushes:
\begin{itemize}
\item Select $d-1$ stacks other than $S_i$, and put their top elements 
(temporarily) into $S_{\emptyset}$. At the end of this step, 
$S_{\emptyset}$ holds $d - 1$ items and have one more space available.
\item Put the elements of $S_i$ into the thus freed tops of the stacks, 
including one on the top of $S_{\emptyset}$. 
\item Put the elements back to $S_i$ in the desired order.
\item Restore the tops of the stacks from $S_{\emptyset}$.
\end{itemize}
Each of the phases above takes $d$ pop-pushes to execute, altogether $4d$ 
pop-pushes.
\end{proof}

With Lemma~\ref{l:swap}, we are ready to outline the algorithm that backs
up Proposition~\ref{p:7l}, given in Alg.~\ref{alg:logd}.

\vspace{1mm}
\begin{algorithm}
\vspace{0.025in}

Rearrange the stacks to achieve a balanced partition using Lemma~\ref{l:swap}.

\vspace{0.025in}
Forget about the bottom $\frac{d}{2}$ elements of each stack and 
solve the problem on the top with $SR(n,\frac{d}{2})$.

\vspace{0.025in}
Swap the top $\frac{d}{2}$ elements with the bottom $\frac{d}{2}$ elements 
in each $S_i$.

\vspace{0.025in}
Forget about the bottom $\frac{d}{2}$ elements of each stack and 
solve the problem on the top with $SR(n,\frac{d}{2})$.

\vspace{0.025in}
\caption{SR($n, d$)} \label{alg:logd}
\end{algorithm}
\vspace{1mm}

It is clear that the algorithm returns the correct solution. The number of 
pop-pushes used, line by line, is:
\begin{align*}
F(n,d) & = \underbrace{\vphantom{\frac{a}{b}} 4nd}_{line\;1} + 
\underbrace{\vphantom{\frac{a}{b}}F(n,\frac{d}{2})}_{line\;2} + 
\underbrace{\vphantom{\frac{a}{b}}3nd}_{line\;3} + 
\underbrace{\vphantom{\frac{a}{b}}F(n,\frac{d}{2})}_{line\;4} \\
& = 2F(n, \frac{d}{2}) + 7nd.
\end{align*}

Because each stack can be sorted by label in $d\log d$ pop-pushes 
(see Lemma IV.3. in \cite{HanStiBekYu2017}), adding another $nd\log d$ 
pop-pushes can solve \lsr. 

\begin{corollary}[(A $7nd + 8nd\log d$ upper bound for \lsr with $n \ge d$)]
\label{c:7l}
For arbitrary $n \ge d$, an \lsr instance can be solved using 
$7nd + 8nd\log d$ pop-pushes. 
\end{corollary}

\subsection{Linear pop-push algorithm for \lsr,  $d \le \sqrt{n}$}
\label{subsec:nd1}

Results on fat Rubik Table problem lead to significantly improved upper 
bounds for \usr and \lsr that largely match the lower bound (asymptotically), 
which we establish in this section. The proposed algorithmic approach 
applies directly to \lsr and therefore \usr. The improved upper bounds are 
obtained through recursive applications of the fat Rubik Table result 
(Theorem~\ref{t:fat-rt}) through ``simulated'' fat Rubik Table 
column and row permutations. The recursion is done based on increasing 
$2\frac{\log d}{\log n}$. We first address the case of $2\frac{\log d}
{\log n} \le 1$ (i.e., $d \le \sqrt{n}$), followed by the case 
$2\frac{\log d}{\log n} \le 2$ (i.e., $d \le n$), and finally the general 
case of $2\frac{\log d}{\log n} \le m$ (i.e., $d \le n^{\frac{m}{2}}$).


\begin{lemma}[(Linear pop-push algorithm for \lsr, $d = \sqrt{n}$)]\label{l:d-sqrtn}
\lsr with $d = \sqrt{n}$ can be solved using $O(nd)$ pop-pushes. 
\end{lemma}
\begin{proof}
We construct an $n' \times n' \times K$ fat Rubik Table with $n' = K = d = 
\sqrt{n}$. A depth $K=d$ fat cell of the table with index $(i, j), 1 \le i,j 
\le n' = d$ is identified with the stack indexed $(j-1)*d + i$ (see
Fig.~\ref{figure:simu-rubik} for an example), which ranges between $1$ 
and $n = d^2$. 

\begin{figure}[ht!]
	\begin{center}
  \begin{overpic}[width={\ifarxiv 0.6\columnwidth \else 1\columnwidth \fi}]
	{./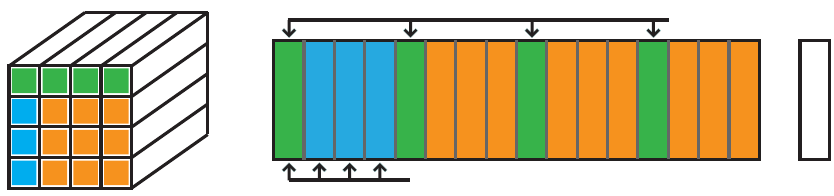}
  \put(81.4,20.3){{\small a fat row}}
  \put(51,0.8){{\small a fat column}}
	\end{overpic}
	\end{center}
  \caption{\label{figure:simu-rubik}Correspondence between a $d \times d 
	\times d$ fat Rubik Table and the $n = d^2$ stacks of depth $d$ in a stack 
	rearrangement problem instance. Here, $d = 4$.}
\end{figure}

We first show that we can simulate a single fat column permutation of $n'K 
= \sqrt{n}d = d^2$ items in  $O(d^2)$ pop-pushes, which can be 
achieved by:
\begin{enumerate}
\item Moving the content of $\sqrt{n} = d$ stacks to the top of the $n$ 
stacks using $O(d^2)$ pop-pushes. For each stack, we may move its 
content to the top of other stacks using the operations illustrated in the 
first four figures in Fig.~\ref{figure:d-sqrt-n}, which takes $3d$ 
pop-pushes. Applying this to $d$ stacks requires $3d^2$ pop-pushes, 
resulting the configuration shown in the fifth figure (bottom left) of 
Fig.~\ref{figure:d-sqrt-n}.
\item Sort the $d^2$ elements on top of the stacks arbitrarily, which 
takes $O(d^2)$ pop-pushes. This requires using the buffer stack to hold at 
most one item temporarily. This happens in the fifth 
figure of Fig.~\ref{figure:d-sqrt-n}.
\item Revert the first step above to return the sorted $d^2$ items to 
the $d$ stacks of current interest. This corresponds to going from the 
fifth figure to the last figure in Fig.~\ref{figure:d-sqrt-n}. 
\end{enumerate}

\begin{figure}[h]
	\begin{center}
  \begin{overpic}[width={\ifarxiv 0.7\columnwidth \else \columnwidth \fi}]
	{./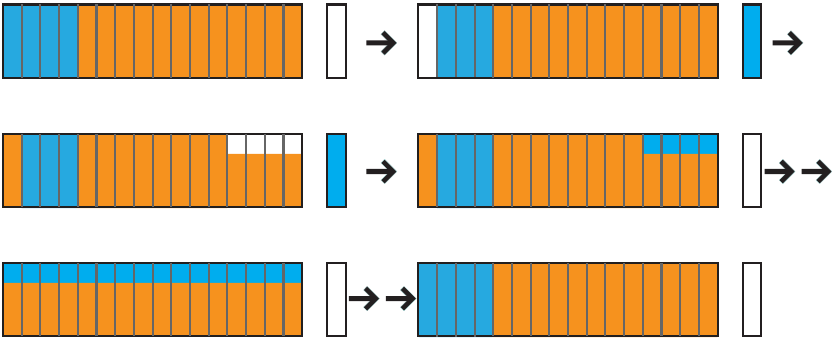}
	\end{overpic}
	\end{center}
  \caption{\label{figure:d-sqrt-n}Illustration of the steps for realizing 
	a simulated fat column permutation in	$O(n'K) = O(d^2)$ pop-pushes. The cyan 
	stacks (left four stacks in the first figure) are the $d$ stacks of 
	interest. 
	The first step (indicated by the arrow) illustrates emptying the leftmost 
	stack to the buffer. 
	Then, the top of some stacks not of current interest (the orange ones) 
	can be moved to the emptied stack (second step). 
	Subsequently, the buffer content can be put on the top of stacks (third
	step). After this is done for all stacks of current interest, the 
	contents of these stacks are moved to the top of the $d^2$ stacks (fourth
	step, marked with double arrows ``$\to \to$''). 
	After rearranging these items as needed, they can then be returned (fifth 
	step, marked with double arrows ``$\to \to$''). 
	The ``simulated'' fat-column shuffle mirrors the step of permuting the 
	first column of Fig.~\ref{figure:fat-rubik}(a) to the first column of 
	Fig.~\ref{figure:fat-rubik}(d). 
	}
\end{figure}

Following the same procedure, a fat row permutation can also be carried out 
in $O(d^2)$ pop-pushes. To apply Theorem~\ref{t:fat-rt}, we partition
all $nd = d^3$ items into $d$ types where items of type $t$, $1 \le t \le d$,
have destinations in stack $(t-1)d + 1$ to stack $td$. By 
Theorem~\ref{t:fat-rt}, using $d$ fat column permutations and 
$d$ fat row permutations, all items of type $t, 1\le t \le d$ can be moved 
to fat column $t$. Then, applying a fat column permutation to a fat column 
$t$ can sort items in the fat column arbitrarily. This solves the \lsr 
problem (and therefore, a \usr problem). 

Tallying the number of pop-pushes, we have done $3d$ fat column/row permutations, 
each of which takes $O(d^2)$ stack pop-pushes. The total is then 
$O(d^3) = O(nd)$ (with more careful counting, we can conclude that the 
number of pop-pushes is bounded by $27nd$).
\end{proof}

It is straightforward to see that Lemma~\ref{l:d-sqrtn} readily 
generalizes to $d < \sqrt{n}$. If $n$ is a square, then the corollary 
directly applies. For $n$ that is not a square, e.g., $n = q^2 + p$ where
$q^2$ is the largest square less than $n$, we can partition the 
$n$ stacks into two groups of $q^2$ stacks each with $q^2 - p$ of the 
stacks overlapping between the two groups (we can assume that $n$ is 
sufficiently large so that $q^2 - p > p$; otherwise $n$ can be treated as 
a constant). Focusing on the first group of $q^2$ stacks, we can then 
apply Lemma~\ref{l:d-sqrtn} (note that $m$ satisfies $\sqrt{n} >
q > \lceil \sqrt{n} \rceil - 1 \ge d$) to ``concentrate'' items that should 
go to the rest $p$ stacks in the $q^2 - p$ stacks shared between the two 
groups. Then, Lemma~\ref{l:d-sqrtn} can be applied again to the second 
group of $q^2$ stacks in a similar fashion, followed by one last application 
to the first group of $q^2$ stacks, which solves the entire problem. We have
proved

\begin{corollary}[(Linear pop-push algorithm for \lsr, $d \le \sqrt{n}$)]\label{c:d-le-sqrtn}
\lsr with $d \le \sqrt{n}$ can be solved using $O(nd)$ pop-pushes. 
\end{corollary}

Another consequence of Theorem~\ref{t:fat-rt} is that, if we 
allow $b = \lceil \sqrt{n} \rceil$ empty buffer stacks (instead of a single buffer stack)
of depth $d$ each, \usr with arbitrary $n$ and $d$ can be solved using $O(nd)$ 
pop-pushes. This is true because a constrained (items are distinguishable by types
but do not have individual labels) fat column permutation can be readily 
executed in $2\sqrt{n}d$ pop-pushes using $\lceil \sqrt{n} \rceil$ buffer stacks. 

\begin{corollary}[(Linear pop-push algorithm for \usr with extra buffers)]
\label{c:d-le-sqrtn} Given $b = \lceil \sqrt{n} \rceil $ buffer stacks, 
\usr with arbitrary but sufficiently large $n$ and $d$ can be solved using 
$O(nd)$ pop-pushes. 
\end{corollary}

If $n$ is a perfect square, then the number of required pop-pushes is bounded 
by $6nd$. It is not clear that having $\lceil \sqrt{n} \rceil$ buffers help with solving 
\lsr in $O(nd)$ time for arbitrary $n$ and $d$; we leave this as an open 
question.  

\subsection{Linear pop-push algorithm for \lsr, $d < n^{\frac{m}{2}}$ and Constant $m > 0$}
\label{subsec:nd2}
We continue to look at the case where $2\frac{\log d}{\log n} > 1$, starting
with $n = d = k^2$ for some integer $k$. The algorithm for doing 
so will invoke Lemma~\ref{l:d-sqrtn} repeatedly, which uses the top $k$ rows 
of the stacks. 

\begin{lemma}[(Linear pop-push algorithm for \lsr, $d = n$)]\label{l:n-eq-d}
For $n = d = k^2$, \lsr can be solved in $O(nd)$ pop-pushes.  
\end{lemma}
\begin{proof}
Similar to how Lemma~\ref{l:d-sqrtn} is proven, we will simulate column and 
row permutations on a fat Rubik Table mapped to the stack rearrangement 
instance. To do the mapping, we simply identify stacks $(i-1)k + 1, \ldots
ik$ with the $i^{\mathrm{th}}$ fat column of the fat Rubik Table. The 
$j, j + k, j + 2k, \ldots, j+(d-1)k$ stacks are identified with the 
$j^{\mathrm{th}}$ fat row. It is clear that, if we can simulate fat 
column/row permutations using $O(k^3)$ pop-pushes, then the statement of the lemma 
holds. 

To simulate a fat column/row permutation, we note that the content of any $k$ 
stacks can be flipped with the contents of the top $k$ rows of the $k^2$ stacks, 
using the buffer stack. This takes $O(k^3)$ pop-pushes and is illustrated 
in Fig.~\ref{figure:n-eq-d}(a) $\to$ Fig.~\ref{figure:n-eq-d}(e), which is 
similar to the procedure illustrated in Fig.~\ref{figure:d-sqrt-n} (if we 
``compress'' $k$ consecutive items in a stack into a single item). Once the 
contents of the selected $k$ stacks (corresponding to a fat column/row) occupy 
the top $k$ rows of the 
$k^2$ stacks, Lemma~\ref{l:d-sqrtn} may be applied to rearrange the items 
in them arbitrarily, which takes $O(k^3)$ time as well. A reversal of the 
first step then completes a simulated fat column/row permutation. The total 
number of pop-push operations used is $O(k^3)$. 
\begin{figure}[ht!]
	\begin{center}
  \begin{overpic}[width={\ifarxiv 0.7\columnwidth \else \columnwidth \fi}]
	{./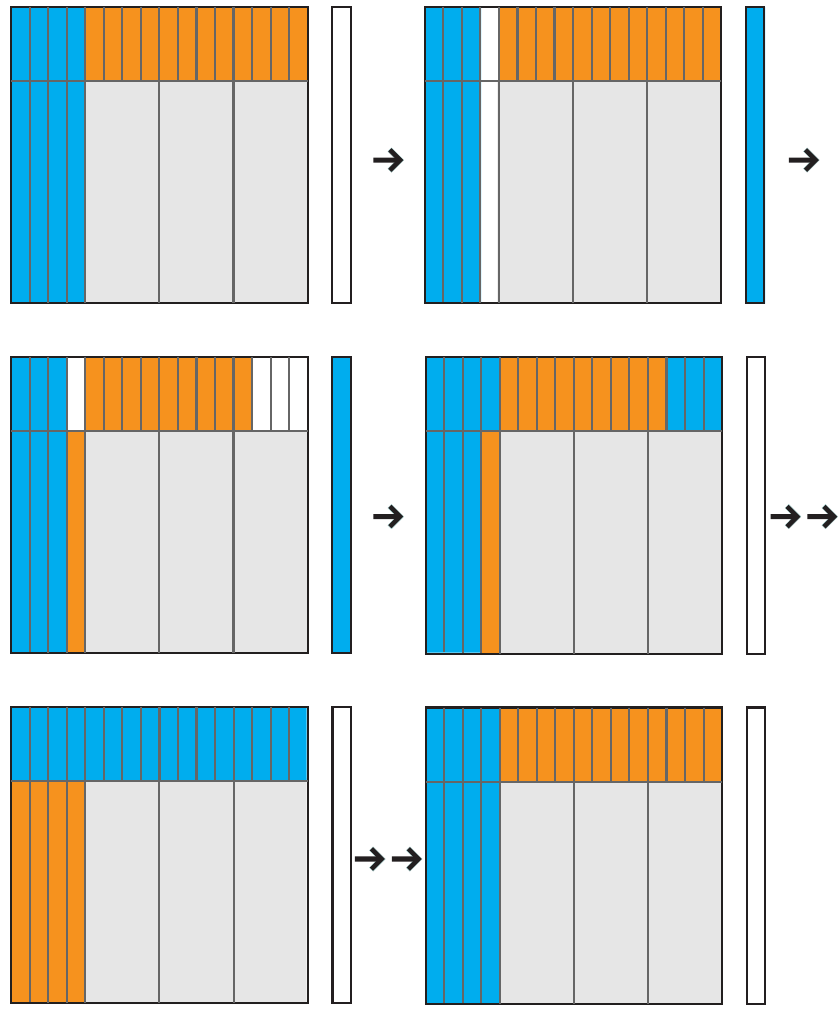}
	\put(17,-3){{\small (e)}}
	\put(58.5,-3){{\small (f)}}
	\put(17,32){{\small (c)}}
	\put(58.5,32){{\small (d)}}
	\put(17,66.5){{\small (a)}}
	\put(58.5,66.5){{\small (b)}}
	\end{overpic}
	\end{center}
	\vspace*{2mm}
  \caption{\label{figure:n-eq-d} Illustration of a simulated fat column 
	permutation over $n$ stacks of depth $d$, with $n = d = k^2$. (a) The 
	cyan colored stacks (left four in the first figure) map to a fat column 
	of a fat Rubik Table. (b)-(e) 
	In $O(k^3)$ stack pop-pushes, a fat column (or a fat row) can be moved 
	to the top $k$ rows of the $n$ stacks, over which Lemma~\ref{l:d-sqrtn} 
	can be applied to perform arbitrary sorting in $O(k^3)$ pop-pushes. 
	(f) After rearrangement, the fat column (or fat row) can be restored 
	using the same procedure in $O(k^3)$ pop-pushes, completing the shuffle
	operation. 
	}
\end{figure}
\end{proof}

It is clear that Lemma~\ref{l:n-eq-d} continues to apply when 
$\sqrt{n} < d < n$, following the same argument used for establishing 
Corollary~\ref{c:d-le-sqrtn}. That is, 

\begin{proposition}[(Linear pop-push algorithm for \lsr, $d \le n$)]\label{p:lsr-d-le-n}
\lsr with $d \le n$ can be solved using $O(nd)$ pop-pushes. 
\end{proposition}

The condition $d = n$ in Lemma~\ref{l:n-eq-d} may be viewed as 
$\frac{\log d}{\log n} = 1$ or $d = n^{\frac{m}{2}}$ with $m = 2$. 
Taking a closer look at the proof for Lemma~\ref{l:n-eq-d}, it is 
straightforward to see that the same argument directly extends to 
show that the \lsr case of $d=k^3$ and $n = k^2$ 
($\frac{\log d}{\log n} = \frac{3}{2}$) can be solved using 
$O(nd)$ pop-pushes for any positive integer $k$. In proving Lemma~\ref{l:n-eq-d}, 
the top $k$ rows of the stacks are used as a \emph{swap space} for 
applying Lemma~\ref{l:d-sqrtn}, simulating a fat column/row permutation. 
In a similar fashion, for $d = k^3$ and $n = k^2$, the top $k^2$ rows 
can be used as the swap space, which allows us to work with a total of 
$k^2\cdot k^2 = k^4$ items. Once the swap space is properly set up, the 
$k^4$ items can be rearranged arbitrarily by Lemma~\ref{l:n-eq-d} using 
$O(k^4)$ pop-pushes. So \lsr with $d = n^{\frac{m}{2}}$ for $m = 3$
can be solved in $O(nd)$ pop-pushes. Corollary~\ref{c:d-le-sqrtn} then 
generalizes to apply to all cases where $\frac{\log d}{\log n} \le 
\frac{3}{2}$. \vspace{1mm}

Recursively, Lemma~\ref{l:n-eq-d} may be generalized to arbitrary 
$m \ge 2$. For $m = 3$, the procedure will call the $m = 2$ case $3k$ 
times. If the $n = d$ case requires $cnd = ck^4$ pop-pushes for some constant 
$c$, then the $m = 3$ case will need $3ck^5$ pop-pushes. Recursively, for 
general $m$, the recursive procedure will require about $3^m cnd$ pop-pushes 
for $d = n^{\frac{m}{2}}$. We have proved

\begin{theorem}[(Algorithm for \lsr with $d \le n^{\frac{m}{2}}$ and 
$m \ge 0$)]\label{t:lsr-d-ge-n} \lsr with $d \le n^{\frac{m}{2}}$ for 
$m \ge 0$ can be solved using $O(3^mnd)$ pop-pushes. 
\end{theorem}

For any fixed $m \ge 2$, it is clear that \lsr can be solved in $O(nd)$ 
pop-pushes for $n^{\frac{m-1}{2}} < d < n^{\frac{m}{2}}$, possibly with a 
larger constant than the $d = n^{\frac{m}{2}}$ case. For fixed $m$, 
$3^m$ is also a constant. Summarizing the results on the upper bounds 
obtained so far, we have

\begin{theorem}[(Linear pop-push algorithm for \lsr, $d \le  \lceil cn \rceil$)]
\label{t:lsr-d-cn}For arbitrary fixed real number $c > 0$, \lsr with 
$d \le \lceil cn \rceil$ can be solved using $O(nd)$ pop-pushes. 
\end{theorem}


For \usr with $d = n = k^2$, with additional care in carrying out 
the recursive procedure, we only need to make $2k$ calls to 
Lemma~\ref{l:d-sqrtn} instead of $3k$ as required in proving 
Lemma~\ref{l:n-eq-d}. This gives us that \usr with $d \le n^{\frac{m}{2}}$ 
for $m \ge 2$ can be solved using $O(2^mnd)$ pop-pushes instead of 
the $O(3^mnd)$ stated in Theorem~\ref{t:lsr-d-ge-n}. We omit the 
very involved procedure, which boils down to doing a mixed column 
and row permutation. 



\subsection{Constant $n$ or $d$}\label{subsec:cnd}
Lastly, we briefly discuss what happens when $n$ or $d$ is a constant.
An $O(nd\log n)$ algorithm for \usr is provided in \cite{HanStiBekYu2017} 
for arbitrary $n$ and $d$, using divide and conquer over the number of 
stacks $n$. This implies that for constant $n$, $O(d)$ pop-pushes is sufficient, 
matching the $\Omega(nd)$ lower bound. For constant $d$, each stack can 
be sorted in $O(1)$ pop-pushes by first moving all type $k$ items to the top of 
the stacks they are at (for a stack $i$ that contains type $k$ item, this 
can be done by first moving the top item from some $d$ stacks to the 
buffer, moving items in stack $i$ to the empty $d$ top spots, and then 
moving them back to stack $i$ so that type $k$ items stay on the top). 
Then type $k$ items can be all moved to the buffer stack and followed 
by emptying stack $k$, then to stack $k$. This yields an $O(n)$-pop-push
algorithm, also matching the lower bound.

\section{Application to Optimal Multi-Robot Motion Planning}\label{sec:mrmp}
The built-in global coordination perspective of Rubik Table problems 
naturally extends them to applications toward optimal multi-robot 
motion planning (\mrmp) as well. In \cite{Yu18RSS}, an $O(1)$ 
makespan-optimal
algorithm, \emph{partition and flow} (\paf), is proposed for solving 
makespan-optimal \mrmp for the grid setting illustrated in 
Fig.~\ref{figure:mrmp-ex}. 
\paf in turn utilizes an expected $O(1)$-optimal algorithm, 
\emph{split and group} (\sag), that recursively splits the problem into 
smaller pieces. 
\sag uses (non-labeled) Rubik Table results. In Sec.~\ref{subsec:rss}, 
we summarize how Rubik Table results are applied to enable \sag as is 
done in \cite{Yu18RSS}. 
Then, in Sec.~\ref{subsec:rb-mrmp}, we describe a new and more direct 
expected $O(1)$-optimal algorithm as an application of the labeled Rubik 
Table result. 
We mention that, whereas our illustration will focus on the 2D setting, 
the methods generalize to higher dimensions. Because the generalization 
is fairly straightforward, we do not further elaborate here. 

\begin{figure}[h]
	\begin{center}
  \begin{overpic}[width={\ifarxiv 0.7\columnwidth \else \columnwidth \fi}]
	{./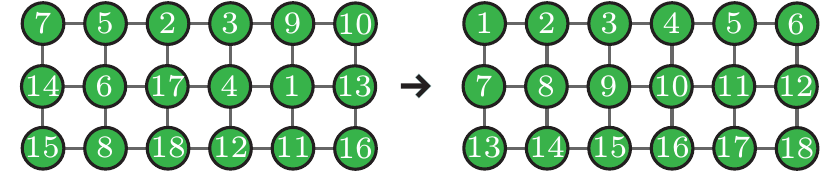}
	\end{overpic}
	\end{center}
  \caption{\label{figure:mrmp-ex} An example \mrmp instance on a $3 \times 
	6$ grid where $18$ robots in an arbitrary configuration on the left must 
	be routed to take the row-major ordering on the right, which is equivalent
	to moving between two arbitrary configurations.}
\end{figure}

\subsection{Recursive Split and Group}\label{subsec:rss}
For a random \mrmp instance (e.g., Fig.~\ref{figure:mrmp-ex}), the expected 
makespan is equal to the sum of the grid's dimensions. For 2D, this is $O(m_1
+ m_2)$ (see Sec.~\ref{subsec:mrmp-d} for the definition of \mrmp on grids). 
To compute a routing solution with expected $O(1)$-optimal makespan, the 
total number of steps must be bounded by $O(m_1 + m_2)$. 
To achieve this, the \emph{split and group} or \sag algorithm from 
\cite{Yu18RSS} \emph{splits} the $m_1 \times m_2$ grid along a longer 
dimension into two equal (or roughly equal) sized grids, and then route 
the robots so that they move to the half that they belong to in the goal 
configuration.
This induces a \emph{grouping} problem where each robot must either stay in 
the current half grid it is in or move to the other half grid.  
For the instance given in Fig.~\ref{figure:mrmp-ex}, the grouping 
problem is given in Fig.~\ref{figure:mrmp-sag}. 

\begin{figure}[h]
	\begin{center}
  \begin{overpic}[width={\ifarxiv 0.8\columnwidth \else \columnwidth \fi}]
	{./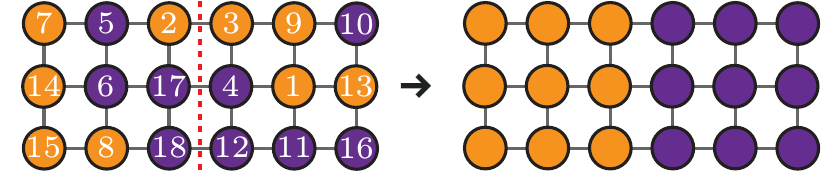}
	\end{overpic}
	\end{center}
  \caption{\label{figure:mrmp-sag} Grid splitting induces a grouping problem
	where robots must be routed to one of the two half grids. Here, the labels 
	do not matter in the goal configuration; only the group (color) matters.
	That is, the goal configuration for this induced problem is not unique.}
\end{figure}

We claim this grouping operation can be achieved in $O(m_1 + m_2)$ steps 
(or makespan). 
Then, assuming without loss of generality that $m_1 \le m_2$, after the 
first iteration, two half grids of dimensions about $\frac{m_1}{2} \times 
m_2$ are obtained. Invoking \sag on each of the two sub-problems, the second 
iteration takes $O(\frac{m_1}{2} 
+ m_2)$ steps because the two sub-problems can be solved in parallel. The 
overall recursion takes $O(m_1 + m_2) + O(\frac{m_1}{2} + m_2) + 
O(\frac{m_1}{2} + \frac{m_2}{2}) + \ldots = O(m_1 + m_2)$ steps, as desired. 

To carry out the $O(m_1 + m_2)$-step grouping operation in the first 
iteration as just claimed (e.g., Fig.~\ref{figure:mrmp-sag}), motion 
primitives are first built, starting from a $3$-step pairwise swaps on a 
$2 \times 3$ grid, as shown in Fig.~\ref{figure:23}. 

\begin{figure}[h]
	\begin{center}
  \begin{overpic}[width={\ifarxiv 0.8\columnwidth \else \columnwidth \fi}]
	{./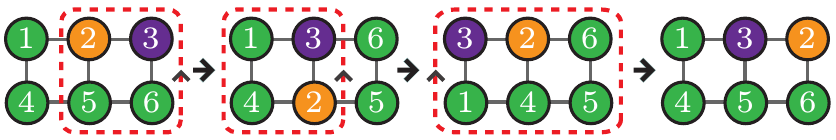}
	\end{overpic}
	\end{center}
  \caption{\label{figure:23} In three steps, robots $2$ and $3$ can be ``swapped''.}
\end{figure}

Then, multiple of these pairwise swaps can be combined, with parallel 
executions, to enable the swapping of two groups of robots of equal number 
on a path of length $\ell$ embedded in a grid (see, e.g., 
Fig.~\ref{figure:mrmp-line}). This can be completed in $O(\ell)$ steps. Note 
that these two groups to be exchanged may overlap; let us denote the group 
swapping procedure on an embedded path as \emph{line swap}. Multiple line 
swaps can be carried out in parallel on parallel paths. 

\begin{figure}[h]
	\begin{center}
  \begin{overpic}[width={\ifarxiv 0.8\columnwidth \else \columnwidth \fi}]
	{./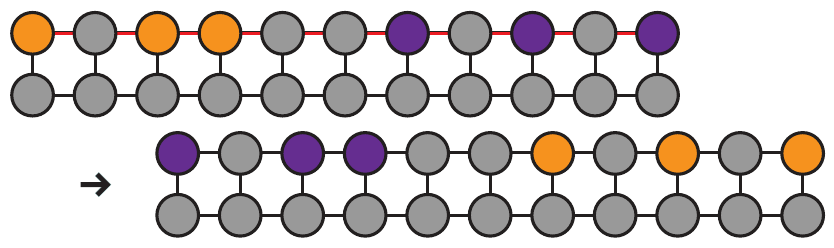}
	\end{overpic}
	\end{center}
  \caption{\label{figure:mrmp-line} The orange and purple robots on the 
	red path of length $\ell$ can be exchanged in $O(\ell)$ steps.}
\end{figure}

Using line swaps, to realize the grouping procedure (e.g., 
Fig.~\ref{figure:mrmp-sag}), we may simply convert it to a Rubik Table 
(more precisely, Rubik Rectangle) problem by relabeling the robots to form 
$m_2$ types. One possible relabeling is given in Fig.~\ref{figure:mrmp-sagr}. 
The Rubik Table results then readily apply to produce three sets of column 
and row shuffles that can be simulated using line swaps, with each set 
executed in parallel, guaranteeing the $O(m_1+m_2)$-step bound. 

\begin{figure}[h]
	\begin{center}
  \begin{overpic}[width={\ifarxiv 0.8\columnwidth \else \columnwidth \fi}]
	{./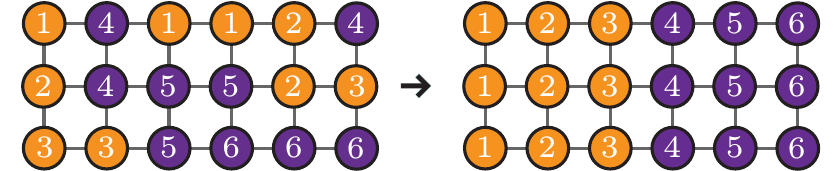}
	\end{overpic}
	\end{center}
  \caption{\label{figure:mrmp-sagr} Converting the grouping problem from 
	Fig.~\ref{figure:mrmp-sag} to a Rubik Rectangle problem with six 
	``types''. }
\end{figure}

\subsection{A New Expected $O(1)$-Optimal Algorithm}\label{subsec:rb-mrmp}
In the $i^{th}$ iteration of the \sag algorithm, $2^i$ calls of the Rubik 
Table results must be invoked. Here, we describe a simplified algorithm 
that only requires a single call to the labeled Rubik Table result. As a
preparation step, we first update the line swap motion primitive to apply 
to fully labeled robots instead of just two groups of robots. 

\begin{lemma}[(Labeled line swap algorithm)]\label{l:lls}
The arbitrary reconfiguration of labeled robots on a path of length $\ell$ 
embedded in an $m_1 \times m_2$ grid with $m_1, m_2 \ge 2$ and $m_1m_2 \ge 
6$ can be realized in $O(\ell)$ steps. 
\end{lemma}
\begin{proof}
The labeled line swap can be realized by recursive and parallel calls to 
the line swap procedure. For the first iteration, the path is split in 
the middle, which induces a partition of the robots; some must have goals 
in the current half they are on and the others must have goals on the 
opposite half. This leads to a problem that is exactly the same as that 
illustrated in Fig.~\ref{figure:mrmp-line}, which can be solved by the 
line swap algorithm in $O(\ell)$ steps, yielding two sub-problems on two 
paths of length about $\frac{\ell}{2}$ each, that can be solved in 
parallel recursively. All together, the required number of steps for 
solving the entire problem is then $O(\ell) + O(\frac{\ell}{2}) + 
O(\frac{\ell}{4}) + \ldots = O(\ell)$. 
\end{proof}

The labeled line swap routine allows the simulation of both 
column and row shuffles as required for solving a labeled Rubik Table 
problem. Given an \mrmp problem on an $n \times n$ square grid, 
Theorem~\ref{t:rubik-table} can readily compute the $n + n + n$ 
shuffles for solving the \mrmp problem as a Rubik Table problem. Each 
set of $n$ shuffles can be executed as labeled line swaps in parallel,
requiring a makespan of $O(n)$. Clearly, the same process works on an 
$m_1 \times m_2$ grid, yielding:

\begin{proposition}[(Expected $O(1)$-optimal algorithm for \mrmp via 
Rubik Table)]\label{p:mrmpn}
\mrmp on an $m_1 \times m_2$ grid with $m_1, m_2 \ge 2$ and $m_1m_2 \ge 
6$ can be solved in $O(m_1 + m_2)$ steps, using a single call to the 
labeled Rubik Table algorithm. 
\end{proposition}

\section{Conclusion and Discussion}\label{sec:conclusion}
In studying decision-making problems for many mobile objects including 
stack rearrangement and multi-robot motion planning, we propose an 
abstract problem involving the shuffling of columns and rows of an $n 
\times n$ table containing $n^2$ items, which we call the Rubik Table
problem. We show that, surprisingly, the 
Rubik Table problems can be solved using a minimal number of column and 
row shuffle operations. Then, through simulating these column and row 
shuffle operations, more efficient methods are developed for problems 
involving moving many objects. Specifically, for stack rearrangement 
problems, the fat Rubik Table result applies recursively to yield 
an $O(nd)$ pop-push algorithm for all $d \le n^{\frac{m}{2}}$ where 
$m > 0$ is 
an arbitrary constant, which meets the lower bound. For multi-robot 
motion planning, Rubik Table results facilitate the iterative split
and group of two groups of robots, leading to an \emph{expected}
$O(1)$ makespan-optimal algorithm, which can be used to drive an 
$O(1)$ makespan-optimal algorithm for the same. This work further 
provides an updated expected $O(1)$-optimal algorithm using a single
call to the labeled Rubik Table algorithm. 

We conclude the work by discussing several interesting open questions
for the readers to explore further. 

\textbf{Solving Rubik Table problems with fewer shuffles.} It is 
conceivable that a typical Rubik Table problem may not need $2n$ shuffles 
to solve. Given its general applicability, it is interesting to seek
algorithms for optimally solving Rubik Table problems minimizing the 
total number of shuffles. Another directly related question to ponder 
here is the computational complexity of optimizing the number of 
shuffles; the problem appears to be intractable. The same questions can 
be asked on the labeled Rubik Table problems and other generalizations. 

\textbf{Bound gap.}
Whereas we know that it is not possible to reach $O(nd)$ for \lsr for 
arbitrary $n$ and $d$, we do not know whether the same is true for \usr. 
In our algorithmic solution, though we achieve $O(nd)$ for arbitrarily 
large but fixed $\frac{\log d}{\log n}$, we have not fully closed the gap for \usr. 
In the approach that we have used, the issue is caused by the $3k$ 
recursive calls. The $3$ there is where the $3^m$ factor (in the $O(3^mnd)$ 
complexity stated in Theorem~\ref{t:lsr-d-ge-n}) comes from. For \usr, 
we were able to further drop the required number of moves to $O(2^mnd)$
with a much more involved argument than repeatedly applying Rubik Table 
results. 
Reducing the number of recursive calls may get us closer to closing 
the small remaining gap between the lower and upper bounds.

\textbf{Hardness of stack rearrangement}. The question of whether 
polynomial time algorithms 
can be designed for optimally solving \usr and \lsr remains open. We 
conjecture that both \usr and \lsr are NP-hard. In this regard, it 
may be interesting to study the case of constant $d$. Whereas the 
case of $d = 1$ can be readily solved, larger $d$ appears to be 
challenging. 

\textbf{Utility of multiple buffer stacks} In the current study, we 
have mainly examined the case of using a single buffer stack. We also 
show that using $\sqrt{n}$ empty buffer stacks allow the resolution 
of \usr in $O(nd)$ pop-pushes without any additional conditions imposed 
between $n$ and $d$ as required by, e.g., Theorem~\ref{t:lsr-d-cn}. 
A natural question to ask is for what values 
of $b \in [1, \sqrt{n})$, $b$ empty buffer stacks would enable solving 
\usr in $O(nd)$ pop-pushes. As have been discussed, it is not clear that 
$\sqrt{n}$ buffer stacks are sufficient for solving \lsr in $O(nd)$ 
pop-pushes for arbitrary $n$ and $d$, which also warrants further examination. 

\textbf{Other queuing models} As generalizations to the current problem,
it could be interesting to study a two-dimensional stack setting, e.g., 
items may be accessed both from the top or from the left side. Does such 
a setting, which provides similar storage capacity as stacks, allow 
more access flexibility? One may also replace a stack with a queue that 
may be accessed from both ends. Many additional settings similar to 
these two could be examined. 

\section*{Acknowledgment}
The work is supported in part by NSF awards IIS-1734419, IIS-1845888 
and CCF-1934924.

\bibliographystyle{IEEETran}
\bibliography{bib/bib}
\end{document}